\documentclass[letterpaper, 10 pt, journal, twoside]{IEEEtran}


\usepackage{times} 
\usepackage{amsmath} 
\usepackage{amssymb}  

\setlength{\marginparwidth}{1.5cm}

\usepackage[utf8]{inputenc}
\usepackage{hyperref}
\usepackage{booktabs}
\usepackage{multirow}
\usepackage{subcaption} 
\usepackage{array}
\usepackage{xspace}
\usepackage{algorithm}
\usepackage{algorithmicx}
\usepackage{tabularx}
\usepackage{graphicx}
\usepackage{placeins}
\usepackage{centernot}
\usepackage{siunitx}

\usepackage[noend]{algpseudocode}

\hypersetup{urlcolor=blue, colorlinks=true}



\newcommand{\sm}[1]{{\leavevmode\color{black}#1}}


\newcommand{\state}{\ensuremath{\mathbf{s}}\xspace}
\newcommand{\States}{\ensuremath{\mathcal{S}}\xspace}
\newcommand{\plan}{\ensuremath{\pi}\xspace}
\newcommand{\Pvec}{\ensuremath{\Pi}\xspace}
\newcommand{\goalreg}{\ensuremath{\mathcal{G}}\xspace}
\newcommand{\open}{\ensuremath{OPEN}\xspace}
\newcommand{\edge}{\ensuremath{(\mathbf{s},\mathbf{a})}\xspace}

\newcommand{\graph}{\ensuremath{G}\xspace}
\newcommand{\vertex}{\ensuremath{v}\xspace}
\newcommand{\ed}{\ensuremath{e}\xspace}
\newcommand{\Vertices}{\ensuremath{\mathcal{V}}\xspace}
\newcommand{\Edges}{\ensuremath{\mathcal{E}}\xspace}
\newcommand{\eclosed}{\ensuremath{E^{closed}}\xspace}
\newcommand{\eopen}{\ensuremath{E^{open}}\xspace}
\newcommand{\eeval}{\ensuremath{E^{eval}}\xspace}

\newcommand{\action}{\ensuremath{\mathbf{a}}\xspace}
\newcommand{\Aset}{\ensuremath{\mathcal{A}}\xspace}
\newcommand{\cost}{\ensuremath{c}\xspace}
\newcommand{\costreal}{\ensuremath{\cost^t}\xspace}
\newcommand{\costlazy}{\ensuremath{\cost}\xspace}
\newcommand{\costopt}{\ensuremath{\cost^*}\xspace}
\newcommand{\thread}{\ensuremath{\text{T}}\xspace}
\newcommand{\numthreads}{\ensuremath{N_t}\xspace}
\newcommand{\gval}{\ensuremath{g}\xspace}
\newcommand{\wh}{\ensuremath{w}\xspace}
\newcommand{\goalg}{\ensuremath{\cost^{bound}}\xspace}
\newcommand{\comment}[1]{}

\newcommand{\pick}{Pick\xspace}
\newcommand{\place}{Place\xspace}
\newcommand{\cci}{\ensuremath{d_{cc}}\xspace}
\newcommand{\tsim}{\ensuremath{t_s}\xspace}
\newcommand{\tplan}{\ensuremath{t_p}\xspace}

\newtheorem{theorem}{Theorem}
\newtheorem{lemma}[theorem]{Lemma}
\newenvironment{proof}{{\bfseries Proof}}{}

\algdef{SE}[VARIABLES]{Variables}{EndVariables}
   {\algorithmicvariables}
   {\algorithmicend\ \algorithmicvariables}
\algnewcommand{\algorithmicvariables}{\textbf{global variables}}
\usepackage[font=small]{caption}

\begin{document}
\title{MPLP: Massively Parallelized Lazy Planning}

\author{Shohin Mukherjee, Sandip Aine, Maxim Likhachev
\thanks{This work was supported by the ARL-sponsored A2I2 program, contract W911NF-18-2-0218, and ONR grant N00014-18-1-2775.}
\thanks{The authors are with the Robotics Institute, Carnegie Mellon University, Pittsburgh, PA 15213, USA. E-mail:
        {\tt\small \{shohinm, asandip, mlikhach\}@andrew.cmu.edu}}
}


\maketitle

\begin{abstract}
Lazy search algorithms have been developed to efficiently solve planning problems in domains where the computational effort is dominated by the cost of edge evaluation. The existing algorithms operate by intelligently balancing computational effort between searching the graph and evaluating edges. However, they are designed to run as a single process and do not leverage the multithreading capability of modern processors. In this work, we propose a massively parallelized, bounded suboptimal, lazy search algorithm (MPLP) that harnesses modern multi-core processors. In MPLP, searching of the graph and edge evaluations are performed completely asynchronously in parallel, leading to a drastic improvement in planning time. We validate the proposed algorithm in two different planning domains: 1) motion planning for 3D humanoid navigation and 2) task and motion planning for a robotic assembly task. We show that MPLP outperforms the state-of-the-art lazy search as well as parallel search algorithms. The open-source code for MPLP is available here: \url{https://github.com/shohinm/parallel_search}
\end{abstract}

\FloatBarrier
\section{Introduction}
Graph search algorithms such as A* and its variants~\cite{hart1968formal, pohl1970heuristic, aine2016multi} are widely used in robotics for task and motion planning problems~\cite{kusnur2021planning, mukherjee2021reactive} which can be formulated as a shortest path problem on an embedded graph in the state-space of the domain. The computational cost of solving the shortest path problem can be split between the cost of traversing through and searching the graph (discovering states, maintaining and managing ordered data structures, rewiring vertices, etc.) and evaluating the cost of the edges. In robotics applications such as in motion planning, the edge evaluation tends to be the bottleneck of solving the shortest path problem. For example, in planning for robot-manipulation, edge evaluation typically corresponds to collision-checks of a robot model against the world model at discrete interpolated states on the edge. Depending on how these models are represented (meshes, spheres, etc.) and how finely the states to be collision checked are interpolated, evaluating an edge can get quite expensive.  

To address this issue, lazy search algorithms~\cite{narayanan2017heuristic, dellin2016unifying, haghtalab2018provable, mandalika2018lazy} have been developed that defer the evaluation of discovered edges and instead use \textit{estimates} of edge costs to search the graph whenever the true edge costs are \textit{unknown}. \sm{Here, \textit{knowing} the true edge cost implies running the computation to evaluate the edge which is often expensive}. Instead, the estimate is an easier-to-compute approximation of the true edge cost. Various lazy search algorithms mainly differ in the way they toggle between searching the graph and evaluating the edges. The benefit of these methods is that they are more time-efficient in domains where the cost of edge evaluation outweighs the cost of the search. In all of the current lazy search algorithms, performance depends on two critical design choices: 1) the strategy employed to toggle between searching the graph and evaluating the edges, and 2) the order in which edges are evaluated~\cite{mandalika2019generalized}. This is because these algorithms are designed to run as a single process.

Our key insight is that instead of toggling between searching the graph and evaluating the edges, these operations can happen asynchronously in parallel. This allows us to harness the massive parallelization capabilities of modern processors.  We propose a new algorithm: Massively Parallel Lazy Planning (MPLP), that leverages this insight. MPLP eliminates the need for an explicit strategy to balance computational effort between the search and edge evaluations by parallelizing these two operations. On the theoretical front, we show that MPLP provides rigorous guarantees of optimality or bounded suboptimality if heuristics are inflated as in Weighted A*~\cite{pohl1970heuristic}. MPLP can be used for any planning problem with expensive to evaluate edges and run efficiently on any processor that supports multiprocessing.  We show this by evaluating and comparing MPLP against lazy search and parallel search baselines on two planning problems: 1) 3D indoor navigation of a humanoid and 2) a task and motion planning problem of stacking a set of blocks by a robot. All experiments were carried out on AWS instances running up to 90 cores in parallel. The experimental results show that by combining ideas from lazy search and parallel search, MPLP achieves higher time efficiency than existing lazy search algorithms such as LwA*~\cite{cohen2014planning} and LSP~\cite{dellin2016unifying}, as well as parallel search algorithms such as PA*SE~\cite{phillips2014pa}.

\FloatBarrier
\section{Related Work}
MPLP is inspired by two categories of planning algorithms that achieve increased efficiency in different ways i.e. lazy search and parallel search. 

\textbf{Lazy search:} Lazy search algorithms achieve greater time efficiency than regular graph search algorithms in domains where the planning time is dominated by edge evaluations. They do so by deferring the evaluation of edges generated during the search and proceeding with the search using cheap-to-compute estimates of the edge costs for the unevaluated edges. Different lazy search algorithms differ in how they toggle between searching the graph and evaluating the edges as well as the order in which the edges are evaluated. In A*, when a state is expanded, all outgoing edges are immediately evaluated. In Lazy Weighted A*~(LwA*)~\cite{cohen2014planning}, when a state is expanded, the outgoing edges are not evaluated, instead, the successors are added to the open list with cheap-to-compute underestimates of the true edge costs. When these states are expanded, only the incoming edges that connect their best predecessors are evaluated. In Lazy Shortest Path~(LSP)~\cite{dellin2016unifying}, the search proceeds without evaluating any edge until the goal is expanded. It then evaluates the edges that are on the shortest path, updates the costs of these edges, and replans, until a path is found with no unevaluated edges. Lazy Receding Horizon A*~(LRA*)~\cite{mandalika2018lazy} allows the search to proceed to an arbitrary lookahead before evaluating edges. In~\cite{mandalika2019generalized}, a general framework for lazy search algorithms called Generalized Lazy Search (GLS) was formulated. It was shown that by employing different strategies to toggle between searching the graph and evaluating edges, as well as choosing the order in which the edges are evaluated, various lazy search algorithms can be recovered. GLS can also leverage priors on edge validity to come up with more efficient policies that minimize planning time. In~\cite{lim2021lazy}, ideas from GLS and incremental methods like LPA* were integrated into a lazy lifelong planning algorithm. There has also been work on anytime algorithms that leverage edge existence priors to come up with a strategy to evaluate edges, such that the suboptimality bound on the solution quality is minimized in expectation of the algorithm interruption time while reducing planning time~\cite{narayanan2017heuristic}. In contrast to all these methods, MPLP harnesses the parallelization capabilities of modern processors by running the search and the edge evaluations completely asynchronously. Therefore there is no need to devise an explicit strategy to toggle between these two components of the search and it makes the entire planning more efficient.

\textbf{Parallel search:} Parallel search algorithms on the other hand seek to make planning faster by leveraging parallelization. There are a number of approaches that parallelize sampling-based planning algorithms. Probabilistic roadmap (PRM) based methods, in particular, can be trivially parallelized, so much so that they have been described as ``embarrassingly parallel"~\cite{amato1999probabilistic}. In these approaches, several parallel processes cooperatively build the roadmap in parallel~\cite{jacobs2012scalable}. Parallelized versions of RRT have also been developed in which multiple cores expand the search tree by sampling and adding multiple new states in parallel~\cite{devaurs2011parallelizing, jacobs2013scalable, park2016parallel}. In this work, however, we focus on search-based planning. 

A trivial approach to achieve parallelization in weighted A* is to generate successors in parallel when expanding a state. The downside is that this leads to minimal improvement in performance in domains with a low branching factor. Another approach that Parallel A*~\cite{irani1986parallel} takes, is to expand states in parallel while allowing re-expansions to account for the fact that states may get expanded before they have the minimal cost from the start state. This leads to a high number of state expansions. There are a number of other approaches that employ different parallelization strategies~\cite{zhou2015massively, burns2010best}, but all of them have could potentially expand an exponential number of states, especially if they employ a weighted heuristic. In contrast, PA*SE~\cite{phillips2014pa} expands states in parallel that are independent of each other in such a way that does not affect the bounds on the solution quality. This approach also has limited parallelization though, especially when the search aims for a tight suboptimality bound since then there are only a few states that are independent and can be safely expanded in parallel. There has also been work on parallelizing  A* search on a single GPU~\cite{zhou2015massively} or multiple GPUs~\cite{he2021efficient} by utilizing multiple parallel priority queues to parallely expand states. \sm{However, GPUs have a single-instruction-multiple-data (SIMD) execution model, which means that they can only run the same set of instructions on multiple data concurrently. This severely limits the design of planning algorithms in several ways. Firstly, the code for expanding a state must be identical, irrespective of what state is being expanded. Secondly, the set of states must be expanded in a batch. This is problematic in domains that have complex actions that correspond to forward simulating dissimilar controllers. In contrast to these approaches, MPLP achieves massive parallelization (shared-memory parallelization) of edge evaluations on the CPU which has a multiple-instruction-multiple-data (MIMD) execution model, which allows it to parallelize potentially dissimilar edges, and therefore generalize across all types of planning domains.}


\FloatBarrier
\section{Method}
\label{sec:methods}
Typical search-based planning algorithms like A* proceed by expanding states till an (optimal) path to the goal is obtained. When a state is expanded, its successors are generated by applying the actions in the action space of the domain, which are represented by edges in a graph. When a successor is generated, the edge between the expanded state and the successor is evaluated for computing the edge cost, which is then used in the search. However, in domains where edge evaluation is expensive, the search slows down dramatically. In lazy search methods, edge evaluation is deferred and an \textit{optimistic} (a fast-to-compute underestimate of the actual cost) estimate of the cost is used instead. The key idea behind MPLP is to run the search using optimistic costs while evaluating relevant edges using a pool of threads entirely asynchronously in parallel. To make this process effective, the research questions are what edges to evaluate, in what order, and how to incorporate these evaluations into the search.

\subsection{Problem Formulation}

Let a finite graph $\graph = (\Vertices, \Edges)$ be defined as a set of vertices \Vertices and edges \Edges. Each vertex $\vertex \in \Vertices$ represents a state \state in the state space of the domain \States. An edge $\ed \in \Edges$ connecting two vertices $\vertex_1$ and $\vertex_2$ in the graph represents an action $\action \in \Aset$ that takes the agent from corresponding states $\state_1$ to $\state_2$. In this work, we assume that all actions are deterministic. Hence an edge \ed can be represented as a pair $\edge$, where \state is the state at which action \action is executed. Each edge has an associated true cost $\costreal:\Edges \rightarrow \sm{[0,\infty]}$ which can be computed using a typically expensive edge evaluation routine.  A feasible edge is an edge with a finite true cost, i.e. $\costreal(\ed) < \infty$. In addition, there is an optimistic cost associated with each edge $\costlazy:\Edges \rightarrow [0,\infty]$ that is easy to compute and underestimates the true cost i.e. $\costlazy(\ed) \leq \costreal(\ed)$.  Let $\Edges^{eval} \subset \Edges$ be a the subset of edges which have been evaluated and hence for which the true costs are available.

A path \plan is defined by an ordered sequence of edges $\edge_{i=1}^N$, the true cost of which is denoted as $\costreal(\plan) = \sum_{i=1}^N \costreal(\ed_i)$. A feasible path is a path with no infeasible edges, therefore $\costreal(\plan) < \infty$. In addition, we define an optimistic cost for a path that is not \textit{fully evaluated} (i.e. not all the edges in the path have been evaluated) using the true cost for the evaluated edges and the optimistic cost otherwise i.e.

$$\cost(\plan) =  \sum_{i=1}^N   
\begin{cases}
      \costreal(\ed_i), & \text{if}\ \ed_i \in \Edges^{eval} \\
      \costlazy(\ed_i), & \text{otherwise}
\end{cases} 
$$ 

MPLP seeks to find a path \plan from a given start state $\state_0$ to a goal region \goalreg comprising of only evaluated edges such that the true cost of the path satisfies the relationship $\costreal(\plan)~\leq~\epsilon~\cdot~\cost^*$, where $\cost^*$ is the optimal cost from $\state_0$ to \goalreg and $\epsilon \geq 1$ is suboptimality bound. There is a computational budget of \numthreads threads available which can run in parallel.

\subsection{Algorithm}

\begin{algorithm}[]
\caption{\label{alg:search} MPLP: Search ($\thread_0$)}
\begin{footnotesize}
\begin{algorithmic}[1]
\State $\Aset\gets\text{ action space }$, $\numthreads \gets$ number of threads
\Comment{Shared variables}
\State $\graph \gets \emptyset$, $\state_0\gets\text{ start state }$, $\goalreg\gets\text{ goal region}$
\State $\eopen \gets \emptyset$, $\eclosed \gets \emptyset$, $\eeval \gets \emptyset$, \sm{$\Pvec \gets \emptyset$}
\State $solution\_found \gets \text{False}$, $terminate \gets \text{False}$, $\goalg\gets-\infty$
\Procedure{Mplp}{}
    \State $path\_exists \gets \text{True}$
    \State Spawn \textsc{MonitorPaths} on $\thread_1$ and \textsc{DelegateEdges} on $\thread_2$
    \While{$\textbf{not } solution\_found \textbf{ and }  path\_exists$}
        \State $path\_exists = \textsc{ComputePath}(\state_0, \goalreg)$
        \label{alg:search/compute_path}
    \EndWhile
    \State $terminate = \text{True}$
    \State $\Return~solution\_found$
\EndProcedure
\Procedure{ComputePath}{$\state_0, \goalreg$} 
    \State $\forall\state\in\graph$, $\state.\gval\gets\infty$, $\state.is\_closed=\text{False}$
    \Comment{Reset discovered states}
    \State $\state_0.\gval\gets0$, $\open \gets \emptyset$, \sm{\open.\textsc{Push}($\state_0$, \textsc{Key}($\state_0$))}
    \While{$\open \neq \emptyset$}
        \State $\state \gets \open.\textsc{Pop}()$
        \If{$\state \in \goalreg$}
            \label{alg:search/goal_reached}
            \Comment{Goal reached}
            \State $\goalg = \max(\goalg, \state.\gval)$
            \label{alg:search/goalg_update}
            \State $\textsc{ConstructPath(\state)}$
            \State $\Return~\text{True}$
        \Else 
            \State \textsc{Expand}$(\state)$
            \State $\state.is\_closed = \text{True}$
        \EndIf
    \EndWhile
    \State $\Return~\text{False}$ 
\EndProcedure
\Procedure{ConstructPath}{$\state$}
    \State $\plan \gets \emptyset$
    \While{$\state \neq \state_0$}
        \State $(\state^P, \action^P) \gets \state.\textsc{GetParent}()$
        \State LOCK
        \If{$(\state^P, \action^P) \in \eopen$}
            \State \sm{$\eopen.\textsc{Update}((\state^P, \action^P), 2)$}
            \label{alg:search/inflate_priority}
            \Comment{Increase priority to 2}
        \EndIf
        \State UNLOCK
        \State $\plan.\textsc{Append}((\state^P, \action^P))$
        \State $\state\gets\state^P$
    \EndWhile
    \If{$\plan \centernot\in \Pvec$}
    \label{alg:search/unique_check}
        \State $\Pvec.\textsc{Append}(\plan)$    
    \EndIf
\EndProcedure
\Procedure{Expand}{$\state$}
    \For{$\action \in \Aset$}
        \If{$\edge \in \eopen \cup \eeval \cup \eclosed$}
            \State $(\state', \cost) \gets \graph.\textsc{GetSuccessor}(\state, \action)$
            \label{alg:search/retrieve_successors}
        \Else
            \State $(\state', \cost) \gets \textsc{GenerateSuccessor}(\state, \action)$
            \label{alg:search/gen_successors}
            \State $\graph.\textsc{AddEdge}(\edge, \cost)$
            \State LOCK
            \State \sm{$\eopen.\textsc{Push}(\edge, 1)$}
            \Comment{Initial priority of 1}
            \label{alg:search/eopen_insert}
            \State UNLOCK
        \EndIf
        \If{$\textbf{not}~\state'.is\_closed \textbf{ and } \state.\gval + \cost < \state'.\gval$}
            \State $\state'.\gval = \state.\gval + \cost$
            \label{alg:search/lower_g_val}
            \State $\state'.\textsc{SetParent}(\edge)$
            \State \open.\textsc{Push}$(\state', \textsc{Key}(\state'))$
        \EndIf
    \EndFor
\EndProcedure
\Procedure{Key}{$\state$}
    \State $\Return~\state.\gval+\wh\cdot\textsc{GetHeuristic}(\state)$
\EndProcedure
\end{algorithmic}
\end{footnotesize}
\end{algorithm}

\begin{algorithm}[]
\caption{\label{alg:edge_evaluation} MPLP: Edge Evaluation ($\thread_2, \thread_{i=3:\numthreads}$)}
\begin{footnotesize}
\begin{algorithmic}[1]
\Procedure{DelegateEdges}{} 
    \While{$\textbf{not } terminate$}
        \For{$i=3:\numthreads$}
            \If{$\thread_i$ is available \textbf{and} $\eopen \neq \emptyset$}
                \State LOCK
                \State $\edge \gets \eopen.\textsc{Pop}()$
                \label{alg:edge_evaluation/eopen_pop}
                \State UNLOCK
                \State $\eeval.\textsc{Insert}(\edge)$
                \State Spawn $\textsc{Evaluate}(\edge)$ on $\thread_i$
                \label{alg:edge_evaluation/delegate}
            \EndIf
        \EndFor
    \EndWhile
\EndProcedure
\Procedure{Evaluate}{$\edge$}
    \State $\costreal \gets \textsc{EvaluateEdge}(\edge)$
    \label{alg:edge_evaluation/evaluate}
    \If{$\graph.\textsc{Cost}(\edge) \neq \costreal$}
        \State $\graph.\textsc{UpdateEdgeCost}(\edge, \costreal)$
        \label{alg:edge_evaluation/update}
    \EndIf
    \State $\eeval.\textsc{Remove}(\edge)$
    \State $\eclosed.\textsc{Insert}(\edge)$
\EndProcedure
\end{algorithmic}
\end{footnotesize}
\end{algorithm}

\begin{algorithm}[]
\caption{\label{alg:monitor_paths} MPLP: Monitor Paths ($\thread_1$)}
\begin{footnotesize}
\begin{algorithmic}[1]
\Procedure{MonitorPaths}{}
    \While{$\textbf{not } terminate$}
        \For{$\plan \in \Pvec$}
            \State $path\_evaluated \gets \text{True}$, $\cost^{\plan} \gets 0$
            \For{$\edge \in \plan$}
                \If{$\edge \in \eclosed$}
                    \State $\cost^{\plan} =\cost^{\plan}+\graph.\textsc{Cost}(\edge)$
                \Else
                     \State $path\_evaluated = \text{False}$
                     \State \textbf{break}
                \EndIf
            \EndFor
            \If{$path\_evaluated$}
                \If{$\cost^{\plan} \leq \goalg$}
                    \label{alg:monitor_paths/feasibility_check}
                    \State $solution\_found = \text{True}$
                    \label{alg:monitor_paths/solution_found}
                    \State $\Return~\plan$ 
                \Else
                    \State $\Pvec.\textsc{Remove}(\plan)$
                \EndIf
            \EndIf
        \EndFor
    \EndWhile
\EndProcedure
\end{algorithmic}
\end{footnotesize}
\end{algorithm}

\begin{figure}[ht]
    \centering
    \includegraphics[width=\columnwidth]{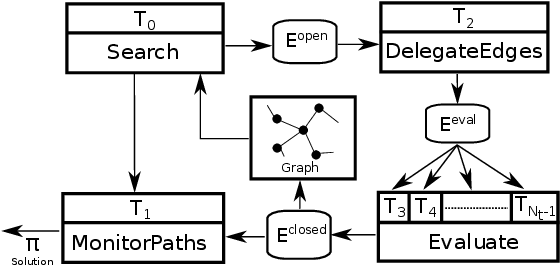}
    \caption{The figure depicts a high-level overview of MPLP. The search runs on thread $\thread_0$. The discovered edges are added to a priority queue \eopen. On thread $\thread_2$, \textsc{DelegateEdges} delegates the evaluation of edges in \eopen to a thread from a pool of threads ($\thread_{i=3:\numthreads}$) dedicated to edge evaluation. The edges under evaluation are moved from \eopen to a list \eeval. When an edge has been evaluated, it is added to a list \eclosed and the graph is updated to reflect the true edge cost. \textsc{MonitorPaths} monitors the generated paths in thread  $\thread_1$ and waits for a path that is fully evaluated (all edges in \eclosed) and satisfies a suboptimality check. Upon finding such a path, it returns it as the solution and the algorithm is terminated.}
    \label{fig:algo_overview}
\end{figure}

\subsubsection*{\textbf{Overview}}
MPLP runs three key aspects of the search asynchronously in parallel: 1) the optimistic search (a search that uses the true cost for the evaluated edges and the optimistic cost otherwise), 2) edge evaluations, and 3) an evaluation status and suboptimality check on the paths generated by the optimistic search. The suboptimality check is explained in Sec.~\ref{sec:Properties}. MPLP allocates the given budget of \numthreads threads to these aspects as illustrated in Fig.~\ref{fig:algo_overview}.

The optimistic search runs on thread $\thread_0$ as an iterative sequence of weighted A*~\cite{likhachev2003ara} searches which proceed without evaluating any edge. Whenever a new edge is discovered, it is added to a priority queue and scheduled for evaluation in order of its priority (higher priority first), and the search proceeds with an optimistic underestimate of the true edge cost. For edges that have already been evaluated by the edge evaluation threads, the search uses the true cost. \sm{Initially, all newly discovered edges have the same priority of 1 for evaluation, in which case the edges in the priority queue follow FIFO ordering. When the search finds a path to the goal, the evaluation priorities of the unevaluated edges in the path are dynamically \sm{increased} to 2.} This is to ensure that the edges that belong to a path to the goal are evaluated before the other edges that are discovered during the search, which depending on the size of the graph and the greediness of the search can be numerous (see ablation in Sec.~\ref{sec:priority_ablation}). Though we use this naive priority update in the current version of the algorithm, more intelligent strategies can potentially be employed.

Another thread $\thread_2$ acts as a delegator of edges awaiting evaluation and delegates the edges in the queue to a pool of threads ($\thread_{i=3:\numthreads}$) dedicated to edge evaluation. Whenever any of these threads finishes evaluating an edge, it updates the graph with the true edge cost. Finally, another thread $\thread_1$ monitors the state of every path that has been found by the optimistic search and when it finds a path that has been fully evaluated and satisfies a suboptimality bound (Theorem~\ref{th:optimality}, Sec.~\ref{sec:Properties}), it returns it as the solution which terminates the algorithm. Running this asynchronously allows $\thread_0$ to proceed immediately to the next search iteration without waiting for the generated path to be evaluated and undergo the suboptimality check. 

Because of the asynchronous operation of MPLP, the edge evaluation threads $\thread_{i=3:\numthreads}$ can update the graph in the middle of an ongoing search on $\thread_0$. \comment{The optimistic search runs a sequence of weighted A* searches (Line~\ref{alg:search/lower_g_val}, Alg.~\ref{alg:search})~\cite{likhachev2003ara}.} When any single weighted A* search on $\thread_0$ terminates, the resulting path is a solution on an implicit snapshot of the graph in which the cost of each edge is its cost at the time the source state of the edge was expanded during the search. This may be the true edge cost or the optimistic underestimate depending on whether the edge was evaluated by an edge evaluation thread.

\subsubsection*{\textbf{Details}}

Besides an open list (\open) for the states, MPLP uses the following data structures for the edges: A priority queue of edges that need to be evaluated (\eopen), a list of edges that have been evaluated (\eclosed) and a list of edges that are under evaluation (\eeval). Unlike in \open where states with smaller keys are placed in the front of the queue, in \eopen edges with higher priorities are placed in front. The optimistic search (Alg.~\ref{alg:search}) runs an iterative sequence of weighted A* searches from scratch (\textsc{ComputePath}) in a loop (Line~\ref{alg:search/compute_path}) as a single process on thread $\thread_0$. For every state that is expanded for the first time, its successors are generated in Line~\ref{alg:search/gen_successors} but the corresponding edges are not evaluated, instead, they are added to \eopen. The search then proceeds with the optimistic edge cost \costlazy for the unevaluated edges, but for edges that have already been evaluated, it uses the true cost \costreal. We use a constant priority of 1 for all edges when they are first inserted into \eopen. When a state in the goal region \goalreg is expanded in Line~\ref{alg:search/goal_reached}, the path \plan obtained by backtracking from the goal state to $\state_0$ is added to a list of generated paths \Pvec in \textsc{ConstructPath}. The priorities of unevaluated edges in \plan are \sm{increased} by a multiplicative factor of 2 in \eopen to prioritize the evaluation of the edges in the paths (Line~\ref{alg:search/inflate_priority}). In addition, the maximum of the costs of the paths generated by \textsc{ComputePath} is stored in a variable \goalg. As explained in Sec.~\ref{sec:Properties}, \goalg is upper bounded by $\wh\cdot\costopt$, where $\wh$ is the heuristic inflation factor and $\costopt$ is the cost of an optimal path in \graph. 


In a separate thread $\thread_2$, \textsc{DelegateEdges} (Alg.~\ref{alg:edge_evaluation}) delegates the evaluation of edges in \eopen in order of their priorities to a  pool of threads ($\thread_{i=3:\numthreads}$) dedicated to edge evaluations (Line~\ref{alg:edge_evaluation/delegate}). When a thread is available, the edge is evaluated in Line~\ref{alg:edge_evaluation/evaluate} to obtain the true cost \costreal. When an edge is being evaluated, it is moved from \eopen to \eeval. Once it has been evaluated, it is moved to \eclosed. If the true cost is different from the estimated cost, the graph is updated (Line~\ref{alg:edge_evaluation/update}). A generated edge, therefore, belongs to one of the three containers i.e. \eopen, \eeval or \eclosed, at any point in time. \sm{Therefore, if a state is revisited, the state along with its incoming edge need not be regenerated (Line~\ref{alg:search/retrieve_successors}, Alg~\ref{alg:search}).}


Another asynchronous process \textsc{MonitorPaths} (Alg.~\ref{alg:monitor_paths}) on thread $\thread_2$ monitors the state of every edge in the paths in \Pvec. If a path is found that has been fully evaluated and that has cost no greater than \goalg, it is returned as the solution. This check is necessary to guarantee bounded suboptimality as proved in section \ref{sec:Properties}. The optimistic search terminates when either \textsc{MonitorPaths} finds a solution and sets the variable $solution\_found$ (Line~\ref{alg:monitor_paths/solution_found}, Alg.~\ref{alg:monitor_paths}) or when \textsc{ComputePath} terminates without a path by exhausting the open list ($path\_exists$ is false). An implementation detail to note is that \eopen is modified by multiple threads asynchronously. Therefore, to ensure thread safety by protecting against data race, \eopen must be accessed under a synchronization lock.

\subsubsection*{\textbf{Discussion}}
MPLP has some key differences from other lazy search algorithms.
\begin{itemize}
    \item The search and edge evaluations run completely asynchronously. Unlike in the GLS framework, there is no explicit strategy employed to toggle between the search and edge evaluations.
    \item All other lazy search algorithms like LwA*, LSP and LRA* only evaluate edges that are either on the shortest path to the goal, or likely to be so. This is to ensure that computational effort is not wasted on evaluating edges not likely to be on the shortest path. MPLP on the other hand evaluates every edge that the search encounters while prioritizing edges that are on the shortest paths in the partially evaluated graphs. This allows it to exploit massive parallelization.
\end{itemize}

\FloatBarrier
\section{Properties}
\label{sec:Properties}
MPLP is guaranteed to be complete and bounded suboptimal and we prove these properties.

\begin{lemma}
\label{lem:monitor_paths}
If there exists a path \plan in \Pvec that is fully evaluated and satisfies the suboptimality bound (Line~\ref{alg:monitor_paths/feasibility_check}, Alg.~\ref{alg:monitor_paths}), \textsc{MonitorPaths} will return a solution in finite time.
\end{lemma}
\begin{proof}
\sm{Since Alg.~\ref{alg:search} runs weighted A*, the paths computed by it are cycle-free. For a finite graph \graph, there are a finite number of cycle-free paths (possibly with a mix of evaluated and unevaluated edges) from $\state_0$ to \goalreg.} Moreover, the uniqueness check in Line~\ref{alg:search/unique_check} of Alg.~\ref{alg:search} ensures that there are no duplicate paths in \Pvec. Therefore \Pvec is of finite size and since \textsc{MonitorPaths} iterates over \Pvec repeatedly, it is bound to discover a path \plan in \Pvec that is fully evaluated and satisfies the suboptimality bound in finite time, if such a path exists in \Pvec.

\end{proof}

\begin{theorem}[\textbf{Completeness}]
\label{th:completness}
If there exists at least one feasible path \plan in \graph from $\state_0$ to \goalreg, MPLP will return a solution in finite time.
\end{theorem}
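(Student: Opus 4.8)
Since \textsc{MonitorPaths} only ever returns a path that passed its validity check --- every edge in \eclosed with finite true cost --- any path it returns is automatically a feasible path from $\state_0$ to \goalreg (it was backtracked from a popped goal state), so it suffices to show MPLP cannot run forever. The plan is to establish three facts: (i) every call to \textsc{ComputePath} terminates and always pops a goal state, hence always backtracks a path into \Pvec; (ii) only finitely many distinct edges are ever generated, so after a finite time $t^*$ the lazy graph \graphL is frozen, with every generated edge sitting in \eclosed with its true cost, and no subsequent search generates a new edge; (iii) that frozen search reaches \goalreg along a \emph{feasible} path $\plan'$ of cost exactly \goalg, which \textsc{MonitorPaths} then accepts. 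Throughout I assume the natural fairness of the asynchronous threads ($\thread_0$ keeps launching searches, the pool $\thread_{3:\numthreads}$ keeps draining \eopen, $\thread_1$ keeps scanning \Pvec), that every edge evaluation terminates, and --- as stated in the paper --- that \costlazy is finite (an unevaluated edge is treated optimistically as valid).

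\emph{Stage (i).} As \graph is finite and the search maintains a closed list forbidding re-expansions, each state is expanded at most once, so there are finitely many \open pushes and pops and \textsc{ComputePath} halts. It cannot halt with \open empty: let $\plan=(\edge_1,\dots,\edge_N)$ be the hypothesised feasible path through $\state_0,\dots,\state_N\in\goalreg$ with every $\costreal(\edge_i)<\infty$. \textsc{ComputePath} expands $\state_0$ first, which generates $\edge_1$; inductively, once $\state_i$ is expanded it generates $\edge_{i+1}$, and the cost used for $\edge_{i+1}$ --- \costlazy, or $\costreal(\edge_{i+1})$ if that edge has already been evaluated --- is finite, so $\state_{i+1}$ acquires a finite \gval and is pushed into \open. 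If \open were emptied, every pushed state would be popped, in particular $\state_N\in\goalreg$, and the search would return at Line~\ref{alg:search/goal_reached}. Hence a goal is always popped (possibly reached earlier along another edge sequence), \goalg is recorded, and \textsc{ConstructPath} places a path into \Pvec; its early exit can only fire on an $\infty$-cost edge left by a concurrent cost update, in which case this call merely adds nothing, which does no harm.

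\emph{Stages (ii)--(iii).} In \textsc{Expand} an edge is inserted into \eopen only the first time it is generated (afterwards it is retrieved from \graphL), so each edge of \Edges is inserted at most once; the evaluation pool pops it, evaluates it in finite time, moves it to \eclosed, and writes \costreal into \graphL. Thus the number of distinct edges ever generated is at most $|\Edges|<\infty$, and there is a finite time $t^*$ after which all of them lie in \eclosed with their true costs and, none being left to create, no search generates a new edge. Consequently any search $S^*$ running after $t^*$ expands only states all of whose outgoing edges already lie in \graphL. Re-running the Stage-(i) induction on $S^*$ --- whose graph is now fixed and fully evaluated --- the path \plan is entirely available with finite true costs, so $S^*$ reaches \goalreg with a finite \goalg along some recorded path $\plan'$; since $\plan'$ reaches the goal with finite \gval and every one of its edges has been evaluated, each has a finite true cost, so $\plan'$ is feasible and $\costreal(\plan')=\goalg$. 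Thereafter \Pvec contains $\plan'$ and \goalg is pinned, so $\thread_1$, scanning \Pvec, eventually finds $\plan'$ with all edges in \eclosed and feasible and with $\cost=\costreal(\plan')=\goalg$; its validity check passes, $\cost\le\goalg$ holds with equality, and it returns $\plan'$. Hence MPLP halts.

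\emph{Main obstacle.} The delicate step is Stage (i): excluding termination of \textsc{ComputePath} with an empty \open and, dually, checking that whenever the goal is popped the \gval recovered through parent pointers genuinely equals the cost of a path actually placed in \Pvec. Everything else --- finiteness of the total edge-evaluation work and the acceptance test of \textsc{MonitorPaths} --- is bookkeeping. A related subtlety is that the evaluation pool may overwrite edge costs in the middle of a search; here I would invoke the invariant noted in Section~\ref{sec:methods} that the closed list prevents any edge from being encountered twice within one \textsc{ComputePath}, so such an overwrite cannot disturb that call's termination or corrupt its parent pointers.
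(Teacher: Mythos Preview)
Your proof is correct and follows the same core strategy as the paper: eventually every edge that the searches discover is evaluated, after which a weighted-A* pass over the fully-evaluated graph finds a feasible path. The paper's own argument is a three-sentence sketch that invokes completeness of weighted A* and stops there; you supply the details it omits --- per-iteration termination of \textsc{ComputePath}, the \textsc{MonitorPaths} acceptance step, and the concurrency caveats --- so your version is strictly more rigorous while remaining the same approach.
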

\begin{proof}
MPLP runs a sequence of weighted A* searches on a finite graph \graph, which is a complete algorithm. The edges in \graph are being simultaneously evaluated by \textsc{EvaluateEdges}. In the worst case, the optimistic search will have discovered all edges in \graph and added them to \eopen (Line~\ref{alg:search/eopen_insert}, Alg.~\ref{alg:search}). Therefore \textsc{EvaluateEdges} will eventually evaluate all edges in \graph, in which case \textsc{ComputePath} will have access to the true costs of all edges and will add a feasible path to \Pvec if such a path exists. Lemma~\ref{lem:monitor_paths} guarantees that the path will then be returned as the solution by \textsc{MonitorPaths} in finite time.
\end{proof}

\begin{lemma}
\label{lem:cost_increasing}
The cost $\cost(\plan_i)$ of a path $\plan_i$ computed by \textsc{ComputePath} in any iteration $i$ of MPLP (Line~\ref{alg:search/compute_path}, Alg.~\ref{alg:search}) satisfies the relationship $ \cost(\plan_i) \leq \wh\cdot\costopt$, where \costopt is the cost of the optimal path in \graph.
\end{lemma}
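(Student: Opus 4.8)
The plan is to reduce the statement to the standard bounded\hyp{}suboptimality guarantee of weighted A*~\cite{pohl1970heuristic}: since every invocation of \textsc{ComputePath} is a weighted A* search with heuristic inflation \wh, the only thing I really have to account for is that the graph it searches carries edge costs that are \emph{underestimates} of the true costs, so the optimum it competes against is no larger than \costopt.

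Concretely, I would fix an iteration $i$ and let $c_i(\edge)$ denote the cost of edge \edge that the search of iteration $i$ actually uses: by Lines~\ref{alg:search/retrieve_successors} and~\ref{alg:search/gen_successors} of Alg.~\ref{alg:search} this is $\costreal(\edge)$ if \edge has already been evaluated and the lazy estimate $\costlazy(\edge)$ otherwise. First I would observe that $c_i$ is well defined despite \textsc{UpdateEdge} running concurrently, because the closed list prevents re-expansions, so each edge is touched at most once within a single \textsc{ComputePath} call; and since $\costlazy(\edge)\le\costreal(\edge)$, we get $c_i(\edge)\le\costreal(\edge)$ for every edge. Hence, letting $\plan^{*}$ be an optimal path in \graph (so $\costreal(\plan^{*})=\costopt$), we have $c_i(\plan^{*})=\sum_{\edge\in\plan^{*}}c_i(\edge)\le\sum_{\edge\in\plan^{*}}\costreal(\edge)=\costopt$. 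Then I would invoke the weighted A* bound: \textsc{ComputePath} is exactly a weighted A* search on \graph with priority $\gval+\wh\cdot\textsc{GetHeuristic}$ and cost function $c_i$, and $\plan^{*}$ is a start-to-goal path in \graph, so — assuming, as is standard, that \textsc{GetHeuristic} is consistent — the returned path satisfies $\cost(\plan_i)=c_i(\plan_i)\le\wh\cdot c_i(\plan^{*})$. Chaining the two inequalities gives $\cost(\plan_i)\le\wh\cdot c_i(\plan^{*})\le\wh\cdot\costopt$.

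The part that needs the most care is the concurrency. In textbook weighted A* the graph is static, whereas here \textsc{UpdateEdge} can raise edge costs from lazy estimates to true costs while the search is running. The resolution I would use is precisely the no-re-expansion property noted in Sec.~\ref{sec:methods}: it makes each \textsc{ComputePath} call equivalent to a weighted A* search on a single, fixed (though expansion-order-dependent) graph whose cost function $c_i$ satisfies $c_i\le\costreal$ edge-wise, which is all the argument above uses. The only other thing I would state explicitly is the heuristic-consistency assumption, since that is what licenses the factor-\wh guarantee for the variant of weighted A* that never reopens closed states; with that in place the lemma follows as above.
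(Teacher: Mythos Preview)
Your argument is correct and follows essentially the same route as the paper: define the intermediate cost function used in iteration $i$, observe it lower\nobreakdash-bounds the true costs edge\nobreakdash-wise so the optimal path in $\graph$ costs at most $\costopt$ under it, and then invoke the weighted A* suboptimality guarantee. You are somewhat more careful than the paper in making the concurrency argument and the heuristic\nobreakdash-consistency assumption explicit, but the underlying proof is the same.
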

\begin{proof}
At any iteration $i$ of MPLP, the search runs on an implicit snapshot of \graph in which the true costs of some of the edges  are known and the remaining edges have an estimated cost which is an underestimate of the true cost. Let this intermediate graph be $\graph_i$. Let the cost of an optimal path in \graph be \costopt, and the cost of the same path in $\graph_i$ be $\cost_i$. Since the cost of the unevaluated edges in $\graph_i$ are an underestimate of the corresponding edges in \graph, this implies that $\cost_i \leq \costopt$. Let the cost of an optimal path in $\graph_i$ be $\cost^*_i$. Therefore,  $\cost^*_i\leq\cost_i\leq\costopt$.
Since \textsc{ComputePath} runs weighted A* on $\graph_i$, the cost  of any path $\plan_i$ computed by it in any iteration $i$ satisfies $\cost(\plan_i) \leq \wh\cdot\cost^*_i$. Therefore, 

\begin{align*}
    &\implies\cost(\plan_i) \leq \wh\cdot\cost^*_i \leq \wh\cdot\cost_i \leq \wh\cdot\costopt \\   
\end{align*}

\end{proof}

\begin{theorem}[\textbf{Soundness}]
\label{th:soundness}
The path returned by MPLP is fully evaluated and feasible.
\end{theorem}
\begin{proof}
\textsc{MonitorPaths} only returns a fully evaluated path that has true cost no greater than \goalg (Line~\ref{alg:monitor_paths/feasibility_check} , Alg.~\ref{alg:monitor_paths}). For it to return an infeasible path, \goalg has to be $\infty$. However since \goalg is initialized to $-\infty$ and gets updated when \textsc{ComputePath} finds a path (Line~\ref{alg:search/goalg_update} , Alg.~\ref{alg:search}), for it to have value of $\infty$, \textsc{ComputePath} has to find a path with $\infty$ cost. This is not possible because Lemma~\ref{lem:cost_increasing} states that any path returned by \textsc{ComputePath} has a finite upper bound, if the graph has a feasible solution.
\end{proof}

\begin{theorem}[\textbf{Bounded suboptimality}]
\label{th:optimality}
The path \plan returned as the solution by MPLP satisfies $\costreal(\plan) \leq \wh\cdot\costopt$ where \costopt is the cost of the optimal path in \graph.
\end{theorem}
\begin{proof}
As per Lemma~\ref{lem:cost_increasing}, a path $\plan_i$ returned by \textsc{ComputePath} in any iteration $i$ of \textsc{MPLP} will never have a cost greater that $\wh\cdot\costopt$. Therefore, 
\begin{align*}
    \implies&\max_i \cost(\plan_i)=\goalg\leq~\wh\cdot\costopt
\end{align*}
If a fully evaluated path \plan is found in \textsc{MonitorPaths} such that $\costreal(\plan)~\leq~\goalg$ then $\costreal(\plan)~\leq~\wh\cdot\costopt$.
\end{proof}

\FloatBarrier
\section{Evaluation}
We evaluate MPLP in two planning domains where edge evaluation is expensive. To emphasize the computational budget, we append the number of threads (\numthreads) being used by MPLP as a suffix i.e. MPLP-\numthreads. \sm{MPLP and the baselines were implemented in C++.}

\subsection{3D navigation}


The first domain is motion planning for 3D ($x,y,\theta$) navigation of a PR2 robot in an indoor environment similar to the one used in \cite{narayanan2017heuristic} and shown in Fig.~\ref{fig:nav3d_problem}. The robot can move along 18 simple motion primitives that independently change the three state coordinates by incremental amounts. Evaluating each primitive involves collision checking of the robot model (approximated as spheres) against the world model (represented as a 3D voxel grid) at interpolated states on the primitive. Though approximating the robot with spheres instead of meshes dramatically speeds up collision checking, it is still the most expensive component of the search. The computational cost of edge evaluation increases with an increasing granularity of interpolated states at which collision checking is carried out. \sm{We use two types of primitives: 1) 16 primitives that change the $(x, y)$ coordinates, but do not change $\theta$ and 2) 2 primitives that only change $\theta$. We vary the computational cost of edge evaluations by varying the Euclidean distance (\cci) between two consecutive states along a primitive at which collision checking is carried out (i.e. the discretization of the primitives for collision checking), for the first type of primitives. In general, a smaller \cci produces a better approximation, while larger \cci can cause the robot to tunnel through obstacles. For the second type of primitives, collision checking is always carried out at $\pm1^{\circ}$ increments in $\theta$, and this parameter is not varied for the sake of simplicity.} In the optimistic approximation of the primitives, we collision check just the final state along the primitive. \sm{In this domain, this approximation is incorrect about $24\%$ of the time.} The search uses Euclidean distance as the admissible heuristic. \sm{The experiments were run on an AWS c5a.24xlarge instance with 96 vCPUs, running Ubuntu 18.04. We evaluate on 50 trials in each of which the start configuration of the robot and goal region are sampled randomly.}

\begin{figure}[ht]
    \centering
    \includegraphics[width=\columnwidth]{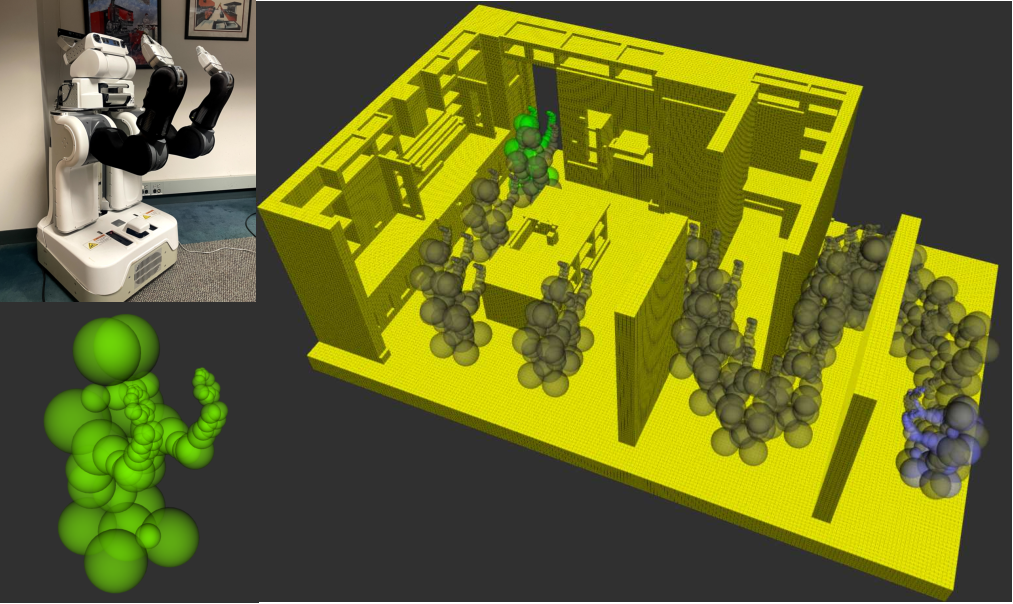}
    \caption{(\textbf{Navigation}) Left: The PR2's collision model is approximated with spheres. 
    Right: The task is to navigate in an indoor map from a given start (purple) and goal (green) states using a set of motion primitives. States at the end of every primitive in the generated plan are shown in black.}
    \label{fig:nav3d_problem}
\end{figure}

\subsubsection{Comparison to lazy search baselines}

We compare MPLP-90 ($\numthreads=90$) with weighted A* and lazy search baselines LwA*~\cite{cohen2014planning} and LSP~\cite{dellin2016unifying} (which are instantiations of GLS~\cite{mandalika2019generalized}). Lazy search algorithms are designed to increase efficiency in domains where edge evaluations are expensive. Therefore we analyze the performance gain achieved by MPLP with increasing computational cost of edge evaluations by reducing \cci. Fig.~\ref{fig:lazy_plots} shows the average speedup achieved by  MPLP  and the baselines over wA* for varying \cci on a set of start and goal pairs with uninflated and inflated heuristics. Speedup over wA* is defined as the ratio of the average runtime of wA* over the average runtime of a specific algorithm. The corresponding raw data is shown in Table~\ref{tab:nav3d_lazy_times}. With increasing granularity of collision checking (decreasing \cci), the speedup achieved by MPLP rapidly outpaces that of the baselines.

\begin{figure}[ht]
    \centering
    \includegraphics[width=\columnwidth]{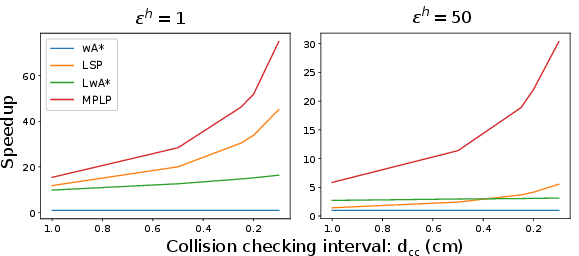}
    \caption{(\textbf{Navigation}) Average speedup achieved by MPLP,  LSP and LwA* over wA*  with uninflated heuristic (left) and with a heurisitic inflation of 50 (right). \cci decreases along the x-axis which increases edge evaluation time.}
    \label{fig:lazy_plots}
\end{figure}

\begin{table}[]
\begin{subtable}{\columnwidth}
\centering
\resizebox{\columnwidth}{!}{%
\begin{tabular}{cccccc}
\toprule
& \multicolumn{5}{c}{Collision checking interval: \cci (\si{\cm})} \\ \midrule
& 1      & 0.5     & 0.25    & 0.2    & 0.1    \\\midrule
&\multicolumn{5}{c}{$\wh=1$} \\ \midrule
wA*                  & 3.55     & 6.82      & 13.40     & 16.59    & 32.96    \\
LwA*                 & 0.36     & 0.54      & 0.91      & 1.09     & 2.01     \\
LSP                  & 0.30     & 0.34      & 0.44      & 0.49     & 0.73     \\
MPLP-90              & \textbf{0.23}     & \textbf{0.24}      & \textbf{0.29}      & \textbf{0.32}     & \textbf{0.44}     \\
\midrule
\end{tabular}
}
\end{subtable}
\begin{subtable}{\columnwidth}
\centering
\resizebox{\columnwidth}{!}{%
\begin{tabular}{cccccc}
& \multicolumn{5}{c}{$\wh=50$} \\ \midrule
wA*                  & 0.76     & 1.48      & 2.83      & 3.53     & 6.98    \\
LwA*                 & 0.28     & 0.50      & 0.93      & 1.15     & 2.24     \\
LSP                  & 0.53     & 0.61      & 0.77      & 0.85     & 1.26     \\
MPLP-90              & \textbf{0.13}     & \textbf{0.13}      & \textbf{0.15}      & \textbf{0.16}     & \textbf{0.23}     \\
\bottomrule
\end{tabular}
}
\end{subtable}
\caption{(\textbf{Navigation}) Average planning times (\si{\second}) for MPLP, wA* and lazy search baselines for varying \cci, with and without heuristic inflation.}
\label{tab:nav3d_lazy_times}
\end{table}

\subsubsection{Comparison to parallel search baselines}
We also compare MPLP with parallel search baselines. The first baseline is a variant of weighted A* in which during a state expansion, the successors of the state are generated and the corresponding edges are evaluated in parallel. For lack of a better term, we call this baseline Parallel Weighted A* (PwA*). Note that this is very different from the Parallel A* (PA*) algorithm~\cite{irani1986parallel}. The second baseline is wPA*SE~\cite{phillips2014pa}. These two baselines leverage parallelization differently. In PwA*, parallelization is at the level of generation of successors, whereas in wPA*SE parallelization is at the level of state expansions. Fig.~\ref{fig:parallel_plots} shows the average speedup achieved by MPLP and the baselines over wA* for varying number of threads, with uninflated and inflated heuristics and with two different values of \cci. The corresponding raw data is shown in Table~\ref{tab:nav3d_parallel_times}. For a single thread, PwA* and wPA*SE have the same runtime as that of wA*. As described in Section~\ref{sec:methods}, MPLP needs a minimum of 4 threads. Since PwA* parallelizes successor generation during an expansion, increasing the number of threads beyond a certain point does not lead to any performance improvement. The maximum speedup achieved by wPA*SE is dependent on the number of states that can be safely expanded in parallel. The performance degrades with a higher number of threads which is consistent with what was observed in ~\cite{phillips2014pa}. Consistent with what was observed in the comparison with lazy search baselines, the speedup obtained by MPLP is greater for a smaller \cci (larger edge evaluation computational cost). In addition, MPLP's performance gain saturates at a higher \numthreads for a smaller \cci. This shows that MPLP leverages multithreading more effectively with increasing computational cost of edge evaluations. \sm{MPLP is effective in domains where the computational cost of evaluating edges relatively outweighs that of exploring the graph optimistically. This implies smaller graphs with expensive to evaluate edges. As is the case with all lazy search algorithms, in domains with larger graphs and inexpensive edges, MPLP is not effective. This can be observed in the top two plots in Fig.~\ref{fig:parallel_plots}. With $\cci = 1\si{\cm}$, wPA*SE outperforms MPLP. However with more expensive edges, as is the case in the bottom two plots, MPLP comprehensively outperforms wPA*SE.}

\begin{figure}[ht]
    \centering
    \includegraphics[width=\columnwidth]{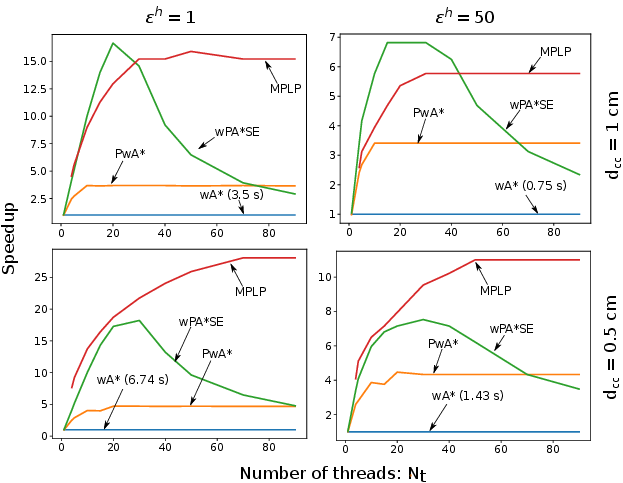}
    \caption{(\textbf{Navigation}) Average speedup achieved by MPLP, PwA* and wPA*SE over wA*  with uninflated heuristic (left) and with a heuristic inflation factor of 50 (right). MPLP achieves a significantly higher speedup as compared to the baselines when $\cci = 0.5\si{\cm}$.}
    \label{fig:parallel_plots}
\end{figure}

\begin{table*}[]
\begin{subtable}{\textwidth}
\centering
\resizebox{\columnwidth}{!}{%
\begin{tabular}{cccccccccccc}
\toprule
           \multicolumn{12}{c}{Number of threads (\numthreads)}       \\\midrule
          & 1    & 4    & 5    & 10   & 15   & 20   & 30   & 40   & 50   & 70   & 90   \\\midrule
          & \multicolumn{10}{c}{$\cci = 1\si{\cm}~|~\wh = 1$}    \\\midrule
wA*       & 3.5  & -    & -    & -    & -    & -    & -    & -    & -    & -    & -    \\
PwA*      & 3.54 & 1.43 & 1.29 & 0.95 & 0.96 & 0.95 & 0.95 & 0.95 & 0.96 & 0.95 & 0.96 \\
wPA*SE     & 3.40 & 0.85 & 0.69 & 0.35 & 0.25 & 0.21 & 0.24 & 0.38 & 0.54 & 0.89 & 1.20 \\
MPLP      & -    & \textbf{0.77} & \textbf{0.63} & \textbf{0.39} & \textbf{0.31} & \textbf{0.27} & \textbf{0.23} & \textbf{0.23} & \textbf{0.22} & \textbf{0.23} & \textbf{0.23} \\
\midrule
\end{tabular}
}
\end{subtable}
\begin{subtable}{\textwidth}
\centering
\resizebox{\columnwidth}{!}{%
\begin{tabular}{cccccccccccc}
          & \multicolumn{10}{c}{$\cci = 1\si{\cm}~|~\wh = 50$}    \\\midrule
wA*       & 0.75  & -    & -    & -    & -    & -    & -    & -    & -    & -    & -    \\
PwA*      & 0.76 & 0.31 & 0.28 & 0.22 & 0.22 & 0.22 & 0.22 & 0.22 & 0.22 & 0.22 & 0.22 \\
wPA*SE     & 0.72 & 0.22 & 0.18 & 0.13 & 0.11 & 0.11 & 0.11 & 0.12 & 0.16 & 0.24 & 0.32 \\
MPLP   & -    & \textbf{0.29} & \textbf{0.24} & \textbf{0.19} & \textbf{0.16} & \textbf{0.14} & \textbf{0.13} & \textbf{0.13} & \textbf{0.13} & \textbf{0.13} & \textbf{0.13} \\
\midrule
\end{tabular}
}
\end{subtable}
\begin{subtable}{\textwidth}
\centering
\resizebox{\columnwidth}{!}{%
\begin{tabular}{cccccccccccc}
          & \multicolumn{10}{c}{$\cci = 0.5\si{\cm}~|~\wh = 1$}    \\\midrule            
WA*       & 6.74  & -    & -    & -    & -    & -    & -    & -    & -    & -    & -    \\
PwA*      & 6.76 & 2.68 & 2.33 & 1.67 & 1.69 & 1.43 & 1.43 & 1.44 & 1.43 & 1.44 & 1.44  \\
PA*SE     & 6.44 & 1.64 & 1.30 & 0.67 & 0.47 & 0.39 & 0.37 & 0.51 & 0.70 & 1.04 & 1.41 \\
MPLP   & -     & \textbf{0.88} & \textbf{0.73} & \textbf{0.49} & \textbf{0.41} & \textbf{0.36} & \textbf{0.31}  & \textbf{0.28}  & \textbf{0.26}  & \textbf{0.24}  & \textbf{0.24} \\
\midrule
\end{tabular}
}
\end{subtable}
\begin{subtable}{\textwidth}
\centering
\resizebox{\columnwidth}{!}{%
\begin{tabular}{cccccccccccc}
          & \multicolumn{10}{c}{$\cci = 0.5\si{\cm}~|~\wh = 50$}    \\\midrule
WA*       & 1.43  & -    & -    & -    & -    & -    & -    & -    & -    & -    & -    \\
PwA*      & 1.42 & 0.55 & 0.51 & 0.37 & 0.38 & 0.32 & 0.33 & 0.33 & 0.33 & 0.33 & 0.33 \\
PA*SE     & 1.37 & 0.42 & 0.35 & 0.24 & 0.21 & 0.20 & 0.19 & 0.20 & 0.23 & 0.33 & 0.41 \\
MPLP   & -    & \textbf{0.35} & \textbf{0.28} & \textbf{0.22} & \textbf{0.20} & \textbf{0.18} & \textbf{0.15} & \textbf{0.14} & \textbf{0.13} & \textbf{0.13} & \textbf{0.13} \\
\bottomrule
\end{tabular}
}
\end{subtable}
\caption{(\textbf{Navigation}) Average planning times (\si{\second}) for MPLP, wA* and parallel search baselines (PwA* and PA*SE) for varying \numthreads, with different values of \cci and \wh.}
\label{tab:nav3d_parallel_times}
\end{table*}

\subsubsection{Ablation of priority inflation of edges in optimistic paths}
\label{sec:priority_ablation}
As discussed earlier, to prioritize evaluation of edges that are in the paths computed by the optimistic search over the other discovered edges, their priorities in \eopen are dynamically increased (Line~\ref{alg:search/inflate_priority}, Alg.~\ref{alg:search}). We ablate this to highlight the benefit of doing so and paying the cost of rebalancing \eopen. Fig.~\ref{fig:priority_ablation_plots} shows the average planning times of MPLP with and without the priority inflation, for varying number of threads, with and without heuristic inflation. For smaller \numthreads, priority inflation significantly reduces planning time. With increasing \numthreads, the performance gain diminishes since there is enough computational resource available to evaluate all edges in \eopen parallelly without having to preferentially evaluate edges in the optimistic paths. At the same time, with heuristic inflation, the performance gain of priority inflation is also lower. This is because the search is greedy and discovers fewer edges that need to be evaluated.

\begin{figure}[ht]
    \centering
    \includegraphics[width=\columnwidth]{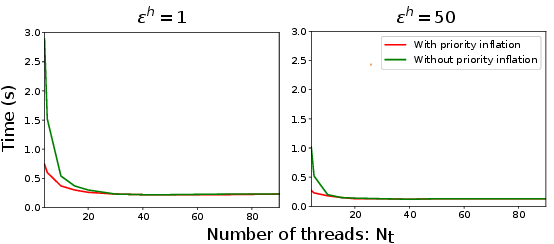}
    \caption{(\textbf{Navigation}) Average planning time (\si{\second}) of MPLP with and without edge priority inflation for varying number of threads.}
    \label{fig:priority_ablation_plots}
\end{figure}

\subsection{Assembly task}

\begin{figure}[ht]
    \centering
    \includegraphics[width=\columnwidth]{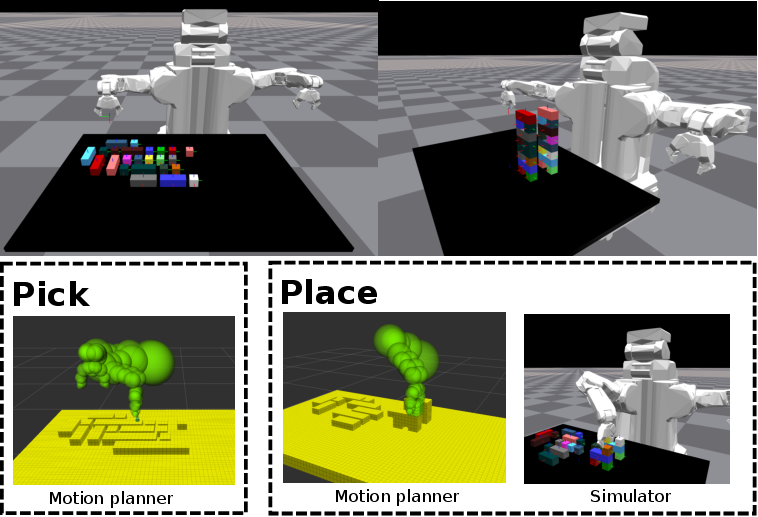}
    \caption{(\textbf{Assembly}) Top: The PR2 has to arrange a set of blocks on the table (left) into a given configuration (right). Bottom: It is equipped with \pick and \place controllers. The \pick controller uses the motion planner to reach a block. The \place controller uses the motion planner to place a block and simulates the outcome of releasing the block.}
    \label{fig:assembly_problem_controllers}
\end{figure}

\begin{table}[]
\centering
\resizebox{\columnwidth}{!}{%
\begin{tabular}{c|c|c|c|c}
\toprule
                           & wA*  & LwA* & LSP & MPLP-40 \\\midrule
Time (s)                   & 496  & 259  & 133 &  \textbf{84}  \\
Speedup                    & 1    & 1.9  & 3.7 &  \textbf{5.9} \\\bottomrule
\end{tabular}
}
\caption{(\textbf{Assembly}) Average planning time, speedup over wA*  of MPLP compared to those of the lazy search baselines.}
\label{tab:assembly_times}
\end{table}

The second domain is a task and motion planning problem of assembling a set of blocks on a table into a given structure by a PR2, as shown in Fig.~\ref{fig:assembly_problem_controllers}. We assume full state observability of the 6D poses of the blocks and the robot's joint configuration. The goal is defined by the 6D poses of each block in the desired structure. The PR2 is equipped with \pick and \place controllers which are used as macro-actions in the high-level planner. Both of these actions use a motion planner internally to compute collision-free trajectories in the workspace. Additionally, \place  has access to a simulator (NVIDIA Isaac Gym~\cite{makoviychuk2021isaac}) to simulate the outcome of placing a block in its desired pose. For example, if the planner tries to place a block in its final pose but has not placed the block underneath yet, the placed block will not be supported and the structure will not be stable. This would lead to an invalid successor during planning. \sm{We set a simulation timeout of  $\tsim=0.2$~\si{\second} to evaluate the outcome of placing a block. Considering the variability in the simulation speed and the overhead of communicating with the simulator, this results in a total wall time of less than $1$~\si{\second} for the simulation.  The motion planner has a timeout of $\tplan=60$~\si{\second} based on the wall time, and therefore that is the maximum time the motion planning can take.} Since the workspace is cluttered, the bottleneck in this domain is the motion planning component of these actions. In the optimistic approximation, these macro-actions are replaced by their optimistic versions which substitute the motion planner with an IK solver, while the motion planner is used when the corresponding edges are evaluated. Successful \pick and \place actions have unit real and optimistic costs, and infinite otherwise. A \pick action on a block is successful if the motion planner finds a feasible trajectory to reach the block within \tplan. A \place action on a block is successful if the motion planner finds a feasible trajectory to place the block within \tplan and simulating the block placement results in the block coming to rest at the desired pose within \tsim. The cost of every feasible plan is twice the number of blocks since placing a block in its desired pose involves a single \pick and \place controller pair each of which has a unit cost. \sm{The experiments were run on an AWS g4dn.16xlarge instance with 64 vCPUs, running Ubuntu 18.04. The instance also has an NVIDIA T4 Tensor Core GPU to run the simulator.} MPLP is run with $\numthreads = 40$ and the addition of more threads did not improve performance in this domain. The number of blocks that are not in their final desired pose is used as the admissible heuristic, with an inflation factor of 5. Table~\ref{tab:assembly_times} shows planning times, speedup over wA*  of MPLP-40 as compared to those of the lazy search baselines.  The numbers are averaged across 20 trials in each of which the blocks are arranged in random order on the table. MPLP-40 achieves a 5.9x speedup over wA* and a 1.6x speedup over LSP.

\FloatBarrier
\section{Discussion and Future Work}
In this work, we presented MPLP, a massively parallelized lazy search algorithm that integrates ideas from lazy search and parallel search. We proved that MPLP is sound, complete and bounded suboptimal. Our experiments showed that MPLP achieves higher efficiency than both lazy search and parallel search algorithms on two very different planning domains. \sm{Therefore in practice, we recommend using MPLP in any planning domain where the computational bottleneck is edge evaluations and where the successors of a state can be generated without evaluating the connecting edges.}

MPLP assigns a uniform evaluation priority to edges when they are first discovered and only increases the priorities of edges that belong to a path. However, if there is a domain-dependent edge-existence prior available, it can be seamlessly integrated into the algorithm by assigning the evaluation priority derived from it. 

Instead of a naive implementation of LSP, where each shortest path search is run from scratch, an incremental approach of updating the graph and re-using the previous search tree using LPA* mechanics can be more efficient \cite{mandalika2019generalized, lim2021lazy}. However, because of the massive parallelization of edge evaluations in MPLP, a potentially large number of edges get updated in-between searches. Therefore running the search in each iteration from scratch is more efficient than incremental methods. However, based on the number of updated edges, an adaptive strategy can be employed to either re-use the previous search tree or plan from scratch.

\FloatBarrier
\bibliographystyle{IEEEtran}
\bibliography{main}

\begin{thebibliography}{10}
\providecommand{\url}[1]{#1}
\csname url@samestyle\endcsname
\providecommand{\newblock}{\relax}
\providecommand{\bibinfo}[2]{#2}
\providecommand{\BIBentrySTDinterwordspacing}{\spaceskip=0pt\relax}
\providecommand{\BIBentryALTinterwordstretchfactor}{4}
\providecommand{\BIBentryALTinterwordspacing}{\spaceskip=\fontdimen2\font plus
\BIBentryALTinterwordstretchfactor\fontdimen3\font minus
  \fontdimen4\font\relax}
\providecommand{\BIBforeignlanguage}[2]{{%
\expandafter\ifx\csname l@#1\endcsname\relax
\typeout{** WARNING: IEEEtran.bst: No hyphenation pattern has been}%
\typeout{** loaded for the language `#1'. Using the pattern for}%
\typeout{** the default language instead.}%
\else
\language=\csname l@#1\endcsname
\fi
#2}}
\providecommand{\BIBdecl}{\relax}
\BIBdecl

\bibitem{hart1968formal}
P.~E. Hart, N.~J. Nilsson, and B.~Raphael, ``A formal basis for the heuristic
  determination of minimum cost paths,'' \emph{IEEE transactions on Systems
  Science and Cybernetics}, vol.~4, no.~2, pp. 100--107, 1968.

\bibitem{pohl1970heuristic}
I.~Pohl, ``Heuristic search viewed as path finding in a graph,''
  \emph{Artificial intelligence}, vol.~1, no. 3-4, pp. 193--204, 1970.

\bibitem{aine2016multi}
S.~Aine, S.~Swaminathan, V.~Narayanan, V.~Hwang, and M.~Likhachev,
  ``Multi-heuristic {A*},'' \emph{The International Journal of Robotics
  Research}, vol.~35, no. 1-3, pp. 224--243, 2016.

\bibitem{kusnur2021planning}
T.~Kusnur, S.~Mukherjee, D.~M. Saxena, T.~Fukami, T.~Koyama, O.~Salzman, and
  M.~Likhachev, ``A planning framework for persistent, multi-uav coverage with
  global deconfliction,'' in \emph{Field and Service Robotics}.\hskip 1em plus
  0.5em minus 0.4em\relax Springer, 2021, pp. 459--474.

\bibitem{mukherjee2021reactive}
S.~Mukherjee, C.~Paxton, A.~Mousavian, A.~Fishman, M.~Likhachev, and D.~Fox,
  ``Reactive long horizon task execution via visual skill and precondition
  models,'' in \emph{2021 IEEE/RSJ International Conference on Intelligent
  Robots and Systems (IROS)}.\hskip 1em plus 0.5em minus 0.4em\relax IEEE,
  2021, pp. 5717--5724.

\bibitem{narayanan2017heuristic}
V.~Narayanan and M.~Likhachev, ``Heuristic search on graphs with existence
  priors for expensive-to-evaluate edges,'' in \emph{Proceedings of the
  International Conference on Automated Planning and Scheduling}, vol.~27,
  no.~1, 2017.

\bibitem{dellin2016unifying}
C.~Dellin and S.~Srinivasa, ``A unifying formalism for shortest path problems
  with expensive edge evaluations via lazy best-first search over paths with
  edge selectors,'' in \emph{Proceedings of the International Conference on
  Automated Planning and Scheduling}, vol.~26, no.~1, 2016.

\bibitem{haghtalab2018provable}
N.~Haghtalab, S.~Mackenzie, A.~Procaccia, O.~Salzman, and S.~Srinivasa, ``The
  provable virtue of laziness in motion planning,'' in \emph{Proceedings of the
  International Conference on Automated Planning and Scheduling}, vol.~28,
  no.~1, 2018.

\bibitem{mandalika2018lazy}
A.~Mandalika, O.~Salzman, and S.~Srinivasa, ``Lazy receding horizon {A*} for
  efficient path planning in graphs with expensive-to-evaluate edges,'' in
  \emph{Proceedings of the International Conference on Automated Planning and
  Scheduling}, vol.~28, no.~1, 2018.

\bibitem{mandalika2019generalized}
A.~Mandalika, S.~Choudhury, O.~Salzman, and S.~Srinivasa, ``Generalized lazy
  search for robot motion planning: Interleaving search and edge evaluation via
  event-based toggles,'' in \emph{Proceedings of the International Conference
  on Automated Planning and Scheduling}, vol.~29, 2019, pp. 745--753.

\bibitem{cohen2014planning}
B.~J. Cohen, M.~Phillips, and M.~Likhachev, ``Planning single-arm manipulations
  with n-arm robots.'' in \emph{Robotics: Science and Systems}, 2014.

\bibitem{phillips2014pa}
M.~Phillips, M.~Likhachev, and S.~Koenig, ``{PA*SE}: Parallel {A*} for slow
  expansions,'' in \emph{Proceedings of the International Conference on
  Automated Planning and Scheduling}, vol.~24, no.~1, 2014.

\bibitem{lim2021lazy}
J.~Lim, S.~Srinivasa, and P.~Tsiotras, ``Lazy lifelong planning for efficient
  replanning in graphs with expensive edge evaluation,'' \emph{arXiv preprint
  arXiv:2105.12076}, 2021.

\bibitem{amato1999probabilistic}
N.~M. Amato and L.~K. Dale, ``Probabilistic roadmap methods are embarrassingly
  parallel,'' in \emph{Proceedings 1999 IEEE International Conference on
  Robotics and Automation (Cat. No. 99CH36288C)}, vol.~1.\hskip 1em plus 0.5em
  minus 0.4em\relax IEEE, 1999, pp. 688--694.

\bibitem{jacobs2012scalable}
S.~A. Jacobs, K.~Manavi, J.~Burgos, J.~Denny, S.~Thomas, and N.~M. Amato, ``A
  scalable method for parallelizing sampling-based motion planning
  algorithms,'' in \emph{2012 IEEE International Conference on Robotics and
  Automation}.\hskip 1em plus 0.5em minus 0.4em\relax IEEE, 2012, pp.
  2529--2536.

\bibitem{devaurs2011parallelizing}
D.~Devaurs, T.~Sim{\'e}on, and J.~Cort{\'e}s, ``Parallelizing {RRT} on
  distributed-memory architectures,'' in \emph{2011 IEEE International
  Conference on Robotics and Automation}.\hskip 1em plus 0.5em minus
  0.4em\relax IEEE, 2011, pp. 2261--2266.

\bibitem{jacobs2013scalable}
S.~A. Jacobs, N.~Stradford, C.~Rodriguez, S.~Thomas, and N.~M. Amato, ``A
  scalable distributed {RRT} for motion planning,'' in \emph{2013 IEEE
  International Conference on Robotics and Automation}.\hskip 1em plus 0.5em
  minus 0.4em\relax IEEE, 2013, pp. 5088--5095.

\bibitem{park2016parallel}
C.~Park, J.~Pan, and D.~Manocha, ``Parallel motion planning using poisson-disk
  sampling,'' \emph{IEEE Transactions on Robotics}, vol.~33, no.~2, pp.
  359--371, 2016.

\bibitem{irani1986parallel}
K.~IRANI and Y.-F. SHIH, ``Parallel {A*} and {AO*} algorithms- an optimality
  criterion and performance evaluation,'' in \emph{1986 International
  Conference on Parallel Processing, University Park, PA}, 1986, pp. 274--277.

\bibitem{zhou2015massively}
Y.~Zhou and J.~Zeng, ``Massively parallel {A*} search on a {GPU},'' in
  \emph{Proceedings of the AAAI Conference on Artificial Intelligence},
  vol.~29, no.~1, 2015.

\bibitem{burns2010best}
E.~Burns, S.~Lemons, W.~Ruml, and R.~Zhou, ``Best-first heuristic search for
  multicore machines,'' \emph{Journal of Artificial Intelligence Research},
  vol.~39, pp. 689--743, 2010.

\bibitem{he2021efficient}
X.~He, Y.~Yao, Z.~Chen, J.~Sun, and H.~Chen, ``Efficient parallel {A*} search
  on multi-{GPU} system,'' \emph{Future Generation Computer Systems}, vol. 123,
  pp. 35--47, 2021.

\bibitem{likhachev2003ara}
M.~Likhachev, G.~J. Gordon, and S.~Thrun, ``{ARA*}: Anytime {A*} with provable
  bounds on sub-optimality,'' \emph{Advances in neural information processing
  systems}, vol.~16, pp. 767--774, 2003.

\bibitem{makoviychuk2021isaac}
V.~Makoviychuk, L.~Wawrzyniak, Y.~Guo, M.~Lu, K.~Storey, M.~Macklin,
  D.~Hoeller, N.~Rudin, A.~Allshire, A.~Handa, and G.~State, ``Isaac gym: High
  performance {GPU}-based physics simulation for robot learning,'' 2021.

\end{thebibliography}

\end{document}